\declaretheorem[name=Definition]{definition}
\declaretheorem[name=Lemma]{lemma}
\declaretheorem[name=Remark]{remark}
\newcommand{\argmax}{\operatornamewithlimits{arg\,max}}
\newcommand{\argmin}{\operatornamewithlimits{arg\,min}}
\newcommand{\reals}{\ensuremath{\mathbb{R}}}
\newcommand{\obsspace}{\ensuremath{\mathcal{O}}}
\newcommand{\obsdist}{\ensuremath{\mathcal{Z}}}
\newcommand{\citet}[1]
{\citeauthor{#1} ̃\shortcite{#1}}
\newcommand{\citep}{\cite}
\crefname{appsec}{Appendix}{Appendices}
\newcommand{\result}[4]{
    $#1\pm#2$ & {\color{green!#3!red}\rule{#4cm}{8pt}}
}
\newcommand{\noresult}{ & }
\title{Online algorithms for POMDPs with continuous state, action, and observation spaces}
\author{ {\bf Zachary N. Sunberg\thanks{\hspace{1ex}\{\texttt{zsunberg}, \texttt{mykel}\}\texttt{@stanford.edu}}\and {\bf Mykel J. Kochenderfer\footnotemark[1]}} \\
Aeronautics and Astronautics Dept.\\
Stanford University \\
Stanford, CA 94305 \\
}
\begin{document}

\nocopyright
\maketitle

\begin{abstract}
    Online solvers for partially observable Markov decision processes have been applied to problems with large discrete state spaces, but continuous state, action, and observation spaces remain a challenge. This paper begins by investigating double progressive widening (DPW) as a solution to this challenge. However, we prove that this modification alone is not sufficient because the belief representations in the search tree collapse to a single particle causing the algorithm to converge to a policy that is suboptimal regardless of the computation time. This paper proposes and evaluates two new algorithms, POMCPOW and PFT-DPW, that overcome this deficiency by using weighted particle filtering. Simulation results show that these modifications allow the algorithms to be successful where previous approaches fail.
\end{abstract}

\section{Introduction}

The partially observable Markov decision process (POMDP) is a flexible mathematical framework for representing sequential decision problems~\cite{littman1995learning,thrun2005probabilistic}.
Once a problem has been formalized as a POMDP, a wide range of solution techniques can be used to solve it.
In a POMDP, at each step in time, an agent selects an action causing the state to change stochastically to a new value based only on the current state and action.
The agent seeks to maximize the expectation of the reward, which is a function of the state and action.
However, the agent cannot directly observe the state, and makes decisions based only on observations that are stochastically generated by the state.

Many \emph{offline} methods have been developed to solve small and moderately sized POMDPs \cite{kurniawati2008sarsop}. Solving larger POMDPs generally requires the use of \emph{online} methods \cite{silver2010pomcp,somani2013despot,kurniawati2016online}.
One widely used online algorithm is partially observable Monte Carlo planning (POMCP) \cite{silver2010pomcp}, which is an extension to Monte Carlo tree search that implicitly uses an unweighted particle filter to represent beliefs in the search tree.

POMCP and other online methods can accomodate continuous state spaces, and there has been recent work on solving problems with continuous action spaces \cite{seiler2015online}.
However, there has been less progress on problems with continuous observation spaces.
This paper presents two similar algorithms which address the challenge of solving POMDPs with continuous state, action, and observation spaces.
The first is based on POMCP and is called partially observable Monte Carlo planning with observation widening (POMCPOW). 
The second solves the belief-space MDP and is called particle filter trees with double progressive widening (PFT-DPW).

\begin{figure}[t]
\begin{center}
    \includegraphics[width=0.9\columnwidth]{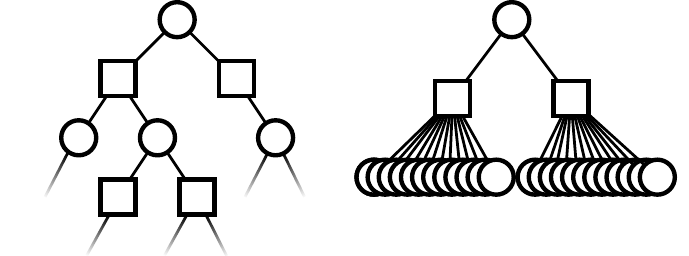}
\end{center}
\caption{POMCP tree for a discrete POMDP (left), and for a POMDP with a continuous observation space (right). Because the observation space is continuous, each simulation creates a new observation node and the tree cannot extend deeper.}
\label{fig:ctree}
\end{figure}

There are two challenges that make tree search difficult in continuous spaces.
The first is that, since the probability of sampling the same real number twice from a continuous random variable is zero, the width of the planning trees explodes on the first step, causing them to be too shallow to be useful (see \cref{fig:ctree}).
POMCPOW and PFT-DPW resolve this issue with a technique called double progressive widening (DPW) \cite{couetoux2011double}. 
The second issue is that, even when DPW is applied, the belief representations used by current solvers collapse to a single state particle, resulting in overconfidence.
As a consequence, the solutions obtained resemble QMDP policies, and there is no incentive for information gathering.
POMCPOW and PFT-DPW overcome this issue by using the observation model to weight the particles used to represent beliefs.

This paper proceeds as follows: \Cref{sec:prior} provides an overview of previous online POMDP approaches. \Cref{sec:background} provides a brief introduction to POMDPs and Monte Carlo tree search. \Cref{sec:algorithms} presents several algorithms for solving POMDPs on continuous spaces, discusses theoretical and practical aspects of their behavior. \Cref{sec:experiments} then gives experimental validation of the algorithms.

\section{Prior Work} \label{sec:prior}

Considerable progress has been made in solving large POMDPs.
Initially, exact offline solutions to problems with only a few discrete states, actions, and observations were sought by using value iteration and taking advantage of the convexity of the value function \cite{kaelbling1998planning}, although solutions to larger problems were also explored using Monte Carlo simulation and interpolation between belief states \cite{thrun1999monte}.
Many effective offline planners for discrete problems use point based value iteration, where a selection of points in the belief space are used for value function approximation,  \cite{kurniawati2008sarsop}.
Offline solutions for problems with continuous state and observation spaces have also been proposed \cite{bai2014integrated,brechtel2013solving}.

There are also various solution approaches that are applicable to specific classes of POMDPs, including continuous problems.
For example, \citet{platt2010belief} simplify planning in large domains by assuming that the most likely observation will always be received, which can provide an acceptable approximation in some problems with unimodal observation distributions.
\citet{morere2016bayesian} solve a monitoring problem with continuous spaces with a Gaussian process belief update.
\citet{hoey2005solving} propose a method for partitioning large observation spaces without information loss, but demonstrate the method only on small state and action spaces that have a modest number of conditional plans.
Other methods involve motion-planning techniques \cite{melchior2007particle,prentice2009belief,bry2011rapidly}.
In particular, \citet{agha2011firm} present a method to take advantage of the existence of a stabilizing controller in belief space planning.
\citet{van2012motion} perform local optimization with respect to uncertainty on a pre-computed path, and \citet{indelman2015planning} devise a hierarchical approach that handles uncertainty in both the robot's state and the surrounding environment.

General purpose online algorithms for POMDPs have also been proposed.
Many early online algorithms focused on point-based belief tree search with heuristics for expanding the trees \cite{ross2008online}.
The introduction of POMCP \cite{silver2010pomcp} caused a pivot toward the simple and fast technique of using the same simulations for decision-making and using beliefs implicitly represented as unweighted collections of particles.
Determinized sparse partially observable tree (DESPOT) is a similar approach that attempts to achieve better performance by analyzing only a small number of random outcomes in the tree \cite{somani2013despot}.
Adaptive belief tree (ABT) was designed specifically to accommodate changes in the environment without having to replan from scratch \cite{kurniawati2016online}.

These methods can all easily handle continuous state spaces \cite{goldhoorn2014continuous}, but they must be modified to extend to domains with continuous action or observation spaces.
Though DESPOT has demonstrated effectiveness on some large problems, since it uses unweighted particle beliefs in its search tree, it struggles with continuous information gathering problems as will be shown in \cref{sec:experiments}.
ABT has been extended to use generalized pattern search for selecting locally optimal continuous actions, an approach which is especially effective in problems where high precision is important \cite{seiler2015online}, but also uses unweighted particle beliefs.
Continuous observation Monte Carlo tree search (COMCTS) constructs observation classification trees to automatically partition the observation space in a POMCP-like approach, however it did not perform much better than a Monte Carlo rollout approach in experiments \cite{pas2012simulation}.

Although research has yielded effective solution techniques for many classes of problems, there remains a need for simple, general purpose online POMDP solvers that can handle continuous spaces, especially continuous observation spaces.


\section{Background} \label{sec:background}

This section reviews mathematical formulations for sequential decision problems and some existing solution approaches. The discussion assumes familiarity with Markov decision processes~\cite{kochenderfer2015decision}, particle filtering~\cite{thrun2005probabilistic}, and Monte Carlo tree search~\cite{browne2012survey}, but reviews some details for clarity.

\subsection{POMDPs}

The Markov decision process (MDP) and partially observable Markov decision process (POMDP)  can represent a wide range of sequential decision making problems. In a Markov decision process, an agent takes actions that affect the state of the system and seeks to maximize the expected value of the rewards it collects \cite{kochenderfer2015decision}. Formally, an MDP is defined by the 5-tuple $(\mathcal{S}, \mathcal{A}, \mathcal{T}, \mathcal{R}, \gamma)$, where $\mathcal{S}$ is the state space, $\mathcal{A}$ is the action space, $\mathcal{T}$ is the transition model, $\mathcal{R}$ is the reward function, and $\gamma$ is the discount factor.
The transition model can be encoded as a set of probabilities, specifically $\mathcal{T}(s' \mid s, a)$ denotes the probability that the system will transition to state $s'$ given that action $a$ is taken in state $s$. In continuous problems, $\mathcal{T}$ is defined by probability density functions.

In a POMDP, the agent cannot directly observe the state.
Instead, the agent only has access to observations that are generated probabilistically based on the actions and latent true states.
A POMDP is defined by the 7-tuple $(\mathcal{S}, \mathcal{A}, \mathcal{T}, \mathcal{R}, \obsspace, \obsdist, \gamma)$, where $\mathcal{S}$, $\mathcal{A}$, $\mathcal{T}$, $\mathcal{R}$, and $\gamma$ have the same meaning as in an MDP.
Additionally, $\obsspace$, is the observation space, and $\obsdist$ is the observation model.
$\obsdist(o \mid s, a, s')$ is the probability or probability density of receiving observation $o$ in state $s'$ given that the previous state and action were $s$ and $a$.

Information about the state may be inferred from the entire history of previous actions and observations and the initial information, $b_0$.
Thus, in a POMDP, the agent's policy is a function mapping each possible history, $h_t = (b_0, a_0, o_1, a_1, o_2, \dots, a_{t-1}, o_t)$ to an action.
In some cases, each state's probability can be calculated based on the history.
This distribution is known as a \emph{belief}, with $b_t(s)$ denoting the probability of state $s$.

The belief is a sufficient statistic for optimal decision making.
That is, there exists a policy, $\pi^*$ such that, when $a_t = \pi^*(b_t)$, the expected cumulative reward or ``value function'' is maximized for the POMDP.
Given the POMDP model, each subsequent belief can be calculated using Bayes' rule \cite{kaelbling1998planning,kochenderfer2015decision}.
However, the exact update is computationally intensive, so approximate approaches such as particle filtering are usually used in practice~\cite{thrun2005probabilistic}.

\subsubsection{Generative Models}

For many problems, it can be difficult to explicitly determine or represent the probability distributions $\mathcal{T}$ or $\obsdist$.
Some solution approaches, however, only require samples from the state transitions and observations.
A generative model, $G$, stochastically generates a new state, reward, and observation in the partially observable case, given the current state and action, that is $s', r = G(s,a)$ for an MDP, or $s', o, r = G(s, a)$ for a POMDP.
A generative model implicitly defines $\mathcal{T}$ and $\obsdist$, even when they cannot be explicitly represented.

\subsubsection{Belief MDPs}

Every POMDP is equivalent to an MDP where the state space of the MDP is the space of possible beliefs.
The reward function of this "belief MDP" is the expectation of the state-action reward function with respect to the belief.
The Bayesian update of the belief serves as a generative model for the belief space MDP.

\subsection{MCTS with Double Progressive Widening}

Monte Carlo Tree Search (MCTS) is an effective and widely studied algorithm for online decision-making \cite{browne2012survey}.
It works by incrementally creating a policy tree consisting of alternating layers of state nodes and action nodes using a generative model $G$ and estimating the state-action value function, $Q(s,a)$, at each of the action nodes. The Upper Confidence Tree (UCT) version expands the tree by selecting nodes that maximize the upper confidence bound
\begin{equation} \label{eqn:ucb}
    UCB(s,a) = Q(s,a) + c \sqrt{\frac{\log N(s)}{N(s,a)}}
\end{equation}
where $N(s,a)$ is the number of times the action node has been visited, $N(s) = \sum_{a \in \mathcal{A}} N(s,a)$, and $c$ is a problem-specific parameter that governs the amount of exploration in the tree \cite{browne2012survey}.

\subsubsection{Double Progressive Widening}

In cases where the action and state spaces are large or continuous, the MCTS algorithm will produce trees that are very shallow.
In fact, if the action space is continuous, the UCT algorithm will never try the same action twice (observe that $\lim_{N(s,a) \to 0} UCB(s,a) = \infty$, so untried actions are always favored).
Moreover, if the state space is continuous and the transition probability density is finite, the probability of sampling the same state twice from $G$ is zero.
Because of this, simulations will never pass through the same state node twice and a tree below the first layer of state nodes will never be constructed.

In progressive widening, the number of children of a node is artificially limited to $k N^\alpha$ where $N$ is the number of times the node has been visited and $k$ and $\alpha$ are hyper-parameters (see \cref{sec:hyper})~\cite{couetoux2011double}.
Originally, progressive widening was applied to the action space and was found to be especially effective when a set of preferred actions was tried first \cite{browne2012survey}.
The term \emph{double} progressive widening refers to progressive widening in both the state and action space.
When the number of state nodes is greater than $k N^\alpha$, instead of simulating a new state transition, one of the previously generated states is chosen with probability proportional to the number of times it has been previously generated.

\subsection{POMCP}

A conceptually straightforward way to solve a POMDP using MCTS is to apply it to the corresponding belief MDP.
Indeed, many tree search techniques have been applied to POMDP problems in this way \cite{ross2008online}.
However, when the Bayesian belief update is used, this approach is computationally expensive.
POMCP and its successors, DESPOT and ABT, can tackle problems many times larger than their predecessors because they use state trajectory simulations, rather than full belief trajectories, to build the tree.

Each of the nodes in a POMCP tree corresponds to a history proceeding from the root belief and terminating with an action or observation.
In the search phase of POMCP tree construction, state trajectories are simulated through this tree.
At each action node, the rewards from the simulations that pass through the node are used to estimate the $Q$ function.
This simple approach has been shown to work well for large discrete problems \cite{silver2010pomcp}.
However, when the action or observation space is continuous, the tree degenerates and does not extend beyond a single layer of nodes because each new simulation produces a new branch.

\section{Algorithms} \label{sec:algorithms}

This section presents several MCTS algorithms for POMDPs including the new POMCPOW and PFT-DPW approaches.

The three algorithms in this section share a common structure.
For all algorithms, the entry point for the decision making process is the \textproc{Plan} procedure, which takes the current belief, $b$, as an input (\textproc{Plan} differs slightly for PFT-DPW in \cref{alg:pft}).
The algorithms also share the same \textproc{ActionProgWiden} function to control progressive widening of the action space.
These components are listed in Listing \ref{alg:common}.
The difference between the algorithms is in the \textproc{Simulate} function.

The following variables are used in the listings and text:
$h$ represents a history $(b, a_1, o_1, \dots a_k, o_k)$, and $ha$ and $hao$ are shorthand for histories with $a$ and $(a,o)$ appended to the end, respectively;
$d$ is the depth to explore, with $d_\text{max}$ the maximum depth;
$C$ is a list of the children of a node (along with the reward in the case of PFT-DPW);
$N$ is a count of the number of visits; and $M$ is a count of the number of times that a history has been generated by the model.
The list of states associated with a node is denoted $B$, and $W$ is a list of weights corresponding to those states.
Finally, $Q(ha)$ is an estimate of the value of taking action $a$ after observing history $h$.
$C$, $N$, $M$, $B$, $W$, and $Q$ are all implicitly initialized to \num{0} or $\emptyset$.
The \textproc{Rollout} procedure, runs a simulation with a default rollout policy, which can be based on the history or fully observed state for $d$ steps and returns the discounted reward.

\begin{algorithm}[t]
    \floatname{algorithm}{Listing}
    \caption{Common procedures} \label{alg:common}
    \begin{algorithmic}[1]
        \Procedure{Plan}{$b$}
            \For{$i \in 1:n$}
                \State $s \gets \text{sample from }b$
                \State $\Call{Simulate}{s, b, d_\text{max}}$
            \EndFor
            \State $\textbf{return } \underset{a}{\argmax}\, Q(ba)$
        \EndProcedure

        \Procedure {ActionProgWiden}{$h$}
            \If{$|C(h)| \leq k_a N(h)^{\alpha_a}$}
                \State $a \gets \Call{NextAction}{h}$
                \State $C(h) \gets C(h) \cup \{a\}$
            \EndIf
            \State $\textbf{return } \underset{a \in C(h)}{\argmax}\, Q(ha) + c \sqrt{\frac{\log N(h)}{N(ha)}}$
        \EndProcedure

    \end{algorithmic}
\end{algorithm}

\subsection{POMCP-DPW}

The first algorithm that we consider is POMCP with double progressive widening (POMCP-DPW).
In this algorithm, listed in \cref{alg:pomcpdpw}, the number of new children sampled from any node in the tree is limited by DPW using the parameters $k_a$, $\alpha_a$, $k_o$, and $\alpha_o$.
In the case where the simulated observation is rejected (line~\ref{lin:notnew}), the tree search is continued with an observation selected in proportion to the number of times, $M$, it has been previously simulated (line~\ref{lin:selecto}) and a state is sampled from the associated belief (line~\ref{lin:samples}).

\setcounter{algorithm}{0}
\begin{algorithm}
    \caption{POMCP-DPW} \label{alg:pomcpdpw}
    \begin{algorithmic}[1]
        \Procedure {Simulate}{$s$, $h$, $d$}        
            \If{$d = 0$}
                \State \textbf{return} $0$
            \EndIf
            \State $a \gets \Call{ActionProgWiden}{h}$
            \If{$|C(ha)| \leq k_o N(ha)^{\alpha_o}$}
                \State $s',o,r \gets G(s,a)$
                \State $C(ha) \gets C(ha) \cup \{o\}$
                \State $M(hao) \gets M(hao) + 1$
                \State $\text{append } s' \text{ to } B(hao)$ \label{lin:insertion}
                \If{$M(hao) = 1$}
                    \State $total \gets r + \gamma \Call{Rollout}{s', hao, d-1}$
                \Else
                    \State $total \gets r + \gamma \Call{Simulate}{s', hao, d-1}$
                \EndIf
            \Else \label{lin:notnew}
                \State $o \gets \text{select } o \in C(ha) \text{ w.p. } \frac{M(hao)}{\sum_{o} M(hao)}$ \label{lin:selecto}
                \State $s' \gets \text{select } s' \in B(hao) \text{ w.p. } \frac{1}{|B(hao)|}$ \label{lin:samples}
                \State $r \gets R(s,a,s')$
                \State $total \gets r + \gamma \Call{Simulate}{s', hao, d-1}$
            \EndIf
            \State $N(h) \gets N(h)+1$
            \State $N(ha) \gets N(ha)+1$
            \State $Q(ha) \gets Q(ha) + \frac{total - Q(ha)}{N(ha)}$
            \State \textbf{return} $total$
        \EndProcedure
    \end{algorithmic}
\end{algorithm}

This algorithm obtained remarkably good solutions for a very large autonomous freeway driving POMDP with multiple vehicles (up to 40 continuous fully observable state dimensions and 72 continuous correlated partially observable state dimensions) \cite{sunberg2017value}.
To our knowledge, that is the first work applying progressive widening to POMCP, and it does not contain a detailed description of the algorithm or any theoretical or experimental analysis other than the driving application.

This algorithm may converge to the optimal solution for POMDPs with discrete observation spaces; however, on continuous observation spaces, POMCP-DPW is suboptimal.
In particular, it finds a QMDP policy, that is, the solution under the assumption that the problem becomes fully observable after one time step \cite{littman1995learning,kochenderfer2015decision}.
In fact, for a modified version of POMCP-DPW, it is easy to prove analytically that it will converge to such a policy.
This is expressed formally in Theorem~\ref{thm:qmdp} below.
A complete description of the modified algorithm and problem requirements including the definitions of polynomial exploration, the regularity hypothesis for the problem, and exponentially sure convergence are given in \cref{sec:proof}.

\begin{definition}[QMDP value]
    Let $Q_\text{MDP}(s,a)$ be the optimal state-action value function assuming full observability starting by taking action $a$ in state $s$. The \emph{QMDP value} at belief $b$, $Q_\text{MDP}(b,a)$, is the expected value of $Q_\text{MDP}(s,a)$ when $s$ is distributed according to $b$.
\end{definition}

\begin{restatable}[Modified POMCP-DPW convergence to QMDP]{theorem}{qmdp}
    \label{thm:qmdp}
If a bounded-horizon POMDP meets the following conditions: 1) the state and observation spaces are continuous with a finite observation probability density function, and 2) the regularity hypothesis is met, then modified POMCP-DPW will produce a value function estimate, $\hat{Q}$, that converges to the QMDP value for the problem.
Specifically, there exists a constant $C>0$, such that after $n$ iterations,
\begin{equation*}
    \left| \hat{Q}(b,a) - Q_\text{MDP}(b,a) \right| \leq \frac{C}{n^{1/(10d_{\max}-7)}}
\end{equation*}
exponentially surely in $n$, for every action $a$.
\end{restatable}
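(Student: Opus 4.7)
The plan is to exploit a key structural consequence of a continuous observation space with finite observation density: almost surely, no two simulations from the same $(s,a)$ pair ever produce the same observation, so each newly drawn observation $o$ creates a fresh child $hao$ whose particle list $B(hao)$ contains exactly one state, the $s'$ that generated $o$. Line~18 of \cref{alg:pomcpdpw} then deterministically returns this single particle on every subsequent visit. Consequently, the entire subtree rooted at any depth-one observation node is equivalent to an MCTS run on the underlying fully observable MDP starting from the fixed particle $s'$; there is no Bayesian update or averaging across states at any internal observation node, and the belief machinery collapses completely.

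Given this reduction, I would apply the polynomial-exploration MCTS consistency results of Auger, Couetoux, and Teytaud to each depth-one subtree. Under the polynomial exploration and regularity hypothesis stated in \cref{sec:proof}, their analysis yields, exponentially surely, a bound of the form $|\hat{V}(hao) - V_\text{MDP}(s')| \leq C_1 / (n')^{1/(10 d_{\max}-7)}$ for a subtree that has been visited $n'$ times, with analogous bounds on the associated $Q$-values. The exponent $1/(10 d_{\max}-7)$ in the theorem is inherited directly from this result and reflects the compounding of polynomial rates with depth.

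To lift the bound to the root, I would observe that each outer iteration samples $s \sim b$, selects an action $a$ via \textsc{ActionProgWiden}, and draws $(s',o,r) \sim G(s,a)$. The backed-up return is then an unbiased estimate, up to the per-subtree error above, of $r(s,a,s') + \gamma \max_{a'} Q_\text{MDP}(s', a') = Q_\text{MDP}(s,a)$ with $s \sim b$. A concentration argument on the running average $Q(ba)$, combined with the fact that action widening guarantees each relevant $a$ receives a polynomial number of visits, then gives that $|\hat{Q}(b,a) - \mathbb{E}_{s \sim b}[Q_\text{MDP}(s,a)]|$, which equals $|\hat{Q}(b,a) - Q_\text{MDP}(b,a)|$, shrinks at a compatible polynomial rate.

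The principal obstacle is the bookkeeping that transfers the Auger--Couetoux--Teytaud bound, proved for MCTS rooted at a single fixed MDP state, to our setting in which each outer iteration seeds a distinct random $s'$ via $G$, producing many shallow subtrees rather than one deep one. The argument requires a union bound over the almost-surely-distinct observation branches together with a careful reconciliation of the observation-widening schedule $k_o N(ba)^{\alpha_o}$ with the per-subtree visit counts, so that the two polynomial rates compose without degrading the exponent below $1/(10 d_{\max}-7)$. The zero-collision property of the continuous observation space is what makes this reduction possible: the impossibility of mixing particles inside an observation branch is precisely what forces the estimate toward the QMDP value rather than the true POMDP optimum, and this same structural fact is what motivates the weighted-particle alternatives (POMCPOW and PFT-DPW) introduced later in the paper.
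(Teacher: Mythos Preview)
Your structural insight is exactly right and matches the paper's Lemma~\ref{lem:onestate}: with a continuous observation space and finite density, every observation node almost surely carries a single particle, so below the root the tree is effectively exploring the fully observable MDP. Where you diverge from the paper is in how you exploit this. You propose to apply the Auger--Cou\"etoux--Teytaud bound \emph{separately} to each depth-one subtree (rooted at a sampled $s'$), and then stitch these per-subtree guarantees back together at the root by a concentration-plus-union-bound argument over the many distinct observation branches. You yourself flag this stitching as the ``principal obstacle,'' and you do not actually carry it out; in particular you do not show that the composition preserves the exponent $1/(10d_{\max}-7)$ rather than degrading it (note that each subtree has depth $d_{\max}-1$, and the number of subtrees grows with $n$, so both the per-subtree rate and the union bound need care).

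The paper sidesteps this entirely with one clean idea: it defines an \emph{augmented fully observable MDP} $\mathcal{M}$ whose state space is $\mathcal{S}\cup\{b\}$, where the root belief $b$ is treated as an ordinary state with transition density $\int b(s)\,\mathcal{T}(x'\mid s,a)\,ds$. The sampling $s\sim b$ in \textsc{Plan} is then absorbed into the generative model $G_{\mathcal{M}}$ at the root, and modified POMCP-DPW is shown, line by line, to be \emph{identical} to PUCT run on $\mathcal{M}$ at depth $d_{\max}$. Since $Q_{\mathcal{M}}(b,a)=Q_{\text{MDP}}(b,a)$, Theorem~1 of Auger et al.\ applies directly to the whole tree, root included, and the rate $1/(10d_{\max}-7)$ falls out with no recombination step. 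What this buys is precisely the elimination of your acknowledged obstacle: there is no separate root-level concentration argument, no union bound over branches, and no risk of losing the exponent. Your decomposition could perhaps be pushed through, but it would amount to re-deriving the depth-$d_{\max}$ case of the PUCT theorem from its depth-$(d_{\max}-1)$ case by hand.
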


A proof of this theorem that leverages work by \citet{auger2013continuous} is given in \cref{sec:proof}, but we provide a brief justification here.
The key is that belief nodes will contain only a single state particle (see \cref{fig:treecomp}).
This is because, since the observation space is continuous with a finite density function, the generative model will (with probability one) produce a unique observation $o$ each time it is queried.
Thus, for every generated history $h$, only one state will ever be inserted into $B(h)$ (line~\ref{lin:insertion}, \cref{alg:pomcpdpw}), and therefore $h$ is merely an alias for that state. 
Since each belief node corresponds to a state, the solver is actually solving the fully observable MDP at every node except the root node, leading to a QMDP solution.

As a result of \cref{thm:qmdp}, the action chosen by modified POMCP-DPW will match a QMDP policy (a policy of actions that maximize the QMDP value) with high precision exponentially surely (see Corollary 1 of \citet{auger2013continuous}).
For many problems this is a very useful solution,\footnote{Indeed, a useful online QMDP tree search algorithm could be created by deliberately constructing a tree with a single root belief node and fully observable state nodes below it.} but since it neglects the value of information, a QMDP policy is suboptimal for problems where information gathering is important \cite{littman1995learning,kochenderfer2015decision}.

Although \cref{thm:qmdp} is only theoretically applicable to the modified version of POMCP-DPW, it helps explain the behavior of other solvers.
Modified POMCP-DPW, POMCP-DPW, DESPOT, and ABT all share the characteristic that a belief node can only contain two states if they generated exactly the same observation.
Since this is an event with zero probability for a continuous observation space, these solvers exhibit suboptimal, often QMDP-like, behavior.
The experiments in \cref{sec:experiments} show this for POMCP-DPW and DESPOT, and this is presumably the case for ABT as well.

\begin{figure}[htpb]
    \centering
    \begin{subfigure}[b]{0.45\columnwidth}
        \centering
        \includegraphics[width=\textwidth]{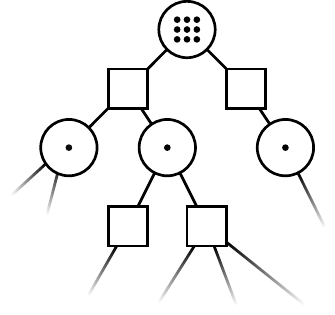}
        \caption{POMCP-DPW Tree}
    \end{subfigure}
    \begin{subfigure}[b]{0.45\columnwidth}
        \centering
        \includegraphics[width=\textwidth]{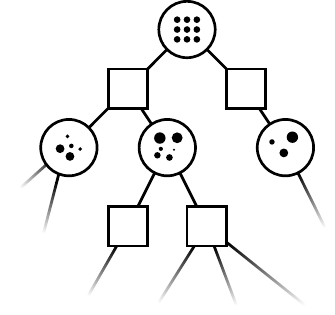}
        \caption{POMCPOW Tree}
    \end{subfigure}
    \caption{Tree Structure Comparison. Each square is an action node, and each unfilled circle is an observation node. Each black dot corresponds to a state particle with the size representing its weight. In continuous observation spaces, the beliefs in a POMCP-DPW tree degenerate to a single particle, while POMCPOW maintains weighted particle mixture beliefs.}
    \label{fig:treecomp}
\end{figure}

\subsection{POMCPOW}

\begin{algorithm}[t]
    \caption{POMCPOW} \label{alg:pomcpow}
    \begin{algorithmic}[1]
        \Procedure {Simulate}{$s$, $h$, $d$}        
            \If{$d = 0$}
                \State \textbf{return} $0$
            \EndIf
            \State $a \gets \Call{ActionProgWiden}{h}$
            \State $s',o,r \gets G(s,a)$
            \If{$|C(ha)| \leq k_o N(ha)^{\alpha_o}$}
                \State $M(hao) \gets M(hao) + 1$
            \Else
                \State $o \gets \text{select } o \in C(ha) \text{ w.p. } \frac{M(hao)}{\sum_{o} M(hao)}$
            \EndIf
            \State $\text{append } s' \text{ to } B(hao)$ \label{lin:insert}
            \State $\text{append } \obsdist(o \mid s, a, s') \text{ to } W(hao)$ \label{lin:weight}
            \If{$o \notin C(ha)$} \Comment{new node}
                \State $C(ha) \gets C(ha) \cup \{o\}$
                \State $total \gets r + \gamma \Call{Rollout}{s', hao, d-1}$
            \Else
                \State $s' \gets \text{select } B(hao)[i] \text{ w.p. } \frac{W(hao)[i]}{\sum_{j=1}^m W(hao)[j]}$ \label{lin:sample}
                \State $r \gets R(s,a,s')$
                \State $total \gets r + \gamma \Call{Simulate}{s', hao, d-1}$
            \EndIf
            \State $N(h) \gets N(h)+1$
            \State $N(ha) \gets N(ha)+1$
            \State $Q(ha) \gets Q(ha) + \frac{total - Q(ha)}{N(ha)}$
            \State \textbf{return} $total$
        \EndProcedure
    \end{algorithmic}
\end{algorithm}

In order to address the suboptimality of POMCP-DPW, we now propose a new algorithm, POMCPOW, shown in \cref{alg:pomcpow}.
In this algorithm, the belief updates are weighted, but they also expand gradually as more simulations are added.
Furthermore, since the richness of the belief representation is related to the number of times the node is visited, beliefs that are more likely to be reached by the optimal policy have more particles.
At each step, the simulated state is inserted into the weighted particle collection that represents the belief (line~\ref{lin:insert}), and a new state is sampled from that belief (line~\ref{lin:sample}).
A simple illustration of the tree is shown in \Cref{fig:treecomp} to contrast with a POMCP-DPW tree.
Because the resampling in line~\ref{lin:sample} can be efficiently implemented with binary search, the computational complexity is $\mathcal{O}(n d \log(n))$.

\subsection{PFT-DPW}

Another algorithm that one might consider for solving continuous POMDPs online is MCTS-DPW on the equivalent belief MDP.
Since the Bayesian belief update is usually computationally intractable, a particle filter is used.
This new approach will be referred to as particle filter trees with double progressive widening (PFT-DPW).
It is shown in \cref{alg:pft}, where $G_\text{PF($m$)}(b,a)$ is a particle filter belief update performed with a simulated observation and $m$ state particles which approximates the belief MDP generative model.
The authors are not aware of any mention of this algorithm in prior literature, but it is very likely that MCTS with particle filters has been used before without double progressive widening under another name.

PFT-DPW is fundamentally different from POMCP and POMCPOW because it relies on simulating approximate belief trajectories instead of state trajectories.
This distinction also allows it to be applied to problems where the reward is a function of the belief rather than the state such as pure information-gathering problems \cite{dressel2017efficient,araya2010pomdp}.

The primary shortcoming of this algorithm is that the number of particles in the filter, $m$, must be chosen a-priori and is static throughout the tree.
Each time a new belief node is created, an $\mathcal{O}(m)$ particle filter update is performed.
If $m$ is too small, the beliefs may miss important states, but if $m$ is too large, constructing the tree is expensive.
Fortunately, the experiments in \cref{sec:experiments} show that it is often easy to choose $m$ in practice; for all the problems we studied, a value of $m=20$ resulted in good performance.

\begin{algorithm}
    \caption{PFT-DPW} \label{alg:pft}
    \begin{algorithmic}[1]
        \Procedure{Plan}{$b$}
            \For{$i \in 1:n$}
                \State $\Call{Simulate}{b, d_\text{max}}$
            \EndFor
            \State $\textbf{return } \underset{a}{\argmax}\, Q(ba)$
        \EndProcedure
        \Procedure {Simulate}{$b$, $d$}        
            \If{$d = 0$}
                \State \textbf{return} $0$
            \EndIf
            \State $a \gets \Call{ActionProgWiden}{b}$
            \If{$|C(ba)| \leq k_o N(ba)^{\alpha_o}$}
                \State $b',r \gets G_\text{PF($m$)}(b,a)$
                \State $C(ba) \gets C(ba) \cup \{(b',r)\}$
                \State $total \gets r + \gamma \Call{Rollout}{b', d-1}$
            \Else
                \State $b', r \gets \text{sample uniformly from } C(ba)$
                \State $total \gets r + \gamma \Call{Simulate}{b', d-1}$
            \EndIf
            \State $N(b) \gets N(b)+1$
            \State $N(ba) \gets N(ba)+1$
            \State $Q(ba) \gets Q(ba) + \frac{total - Q(ba)}{N(ba)}$
            \State \textbf{return} $total$
        \EndProcedure
    \end{algorithmic}
\end{algorithm}

\subsection{Observation Distribution Requirement}

It is important to note that, while POMCP, POMCP-DPW, and DESPOT only require a generative model of the problem,  both POMCPOW and PFT-DPW require a way to query the relative likelihood of different observations ($\obsdist$ in line~\ref{lin:weight}).
One may object that this will limit the application of POMCPOW to a small class of POMDPs, but we think it will be an effective tool in practice for two reasons.

First, this requirement is no more stringent than the requirement for a standard importance resampling particle filter, and such filters are used widely, at least in the field of robotics that the authors are most familiar with. 
Moreover, if the observation model is complex, an approximate model may be sufficient.

Second, given the implications of \cref{thm:qmdp}, it is difficult to imagine a tree-based decision-making algorithm or a robust belief updater that does not require some way of measuring whether a state belongs to a belief or history.
The observation model is a straightforward and standard way of specifying such a measure.
Finally, in practice, except for the simplest of problems, using POMCP or DESPOT to repeatedly observe and act in an environment already requires more than just a generative model.
For example, the authors of the original paper describing POMCP~\cite{silver2010pomcp} use heuristic particle reinvigoration in lieu of an observation model and importance sampling.

\section{Experiments} \label{sec:experiments}

Numerical simulation experiments were conducted to evaluate the performance of POMCPOW and PFT-DPW compared to other solvers. The open source code for the experiments is built on the POMDPs.jl framework \cite{egorov2017pomdps} and is hosted at \url{https://github.com/zsunberg/ContinuousPOMDPTreeSearchExperiments.jl}. In all experiments, the solvers were limited to \SI{1}{second} of computation time per step. Belief updates were accomplished with a particle filter independent of the planner, and no part of the tree was saved for re-use on subsequent steps. Hyperparameter values are shown in \cref{sec:hyper}.

\begin{table*}
    {\centering
        \caption{Experimental Results} \label{tab:experiments}

\begin{tabularx}{\linewidth}{lXrlXrlXrl}
\toprule
& & Laser Tag \makebox[0pt][l]{(D, D, D)} & & & Light Dark \makebox[0pt][l]{(D, D, C)} & & & Sub Hunt \makebox[0pt][l]{(D, D, C)} & \\
\midrule
POMCPOW & & \result{-10.3}{0.2}{81}{0.81} & & \result{56.1}{0.6}{76}{0.76} & & \result{69.2}{1.3}{87}{0.87} \\
PFT-DPW & & \result{-11.1}{0.2}{74}{0.74} & & \result{57.2}{0.5}{77}{0.77} & & \result{77.4}{1.1}{97}{0.97} \\
QMDP & & \result{-10.5}{0.2}{79}{0.79} & & \result{-6.4}{1.0}{14}{0.14} & & \result{28.0}{1.3}{35}{0.35} \\
POMCP-DPW & & \result{-10.6}{0.2}{78}{0.78} & & \result{-7.3}{1.0}{13}{0.13} & & \result{28.3}{1.3}{35}{0.35} \\
DESPOT & & \result{-8.9}{0.2}{92}{0.92} & & \result{-6.8}{1.0}{13}{0.13} & & \result{26.8}{1.3}{34}{0.34} \\
POMCP\textsuperscript{D} & & \result{-14.1}{0.2}{49}{0.49} & & \result{61.1}{0.4}{81}{0.81} & & \result{28.0}{1.3}{35}{0.35} \\
DESPOT\textsuperscript{D} & & \noresult{} & & \result{54.2}{1.1}{74}{0.74} & & \result{27.4}{1.3}{34}{0.34} \\

\midrule
& & VDP Tag \makebox[0pt][l]{(C, C, C)} & & & Multilane \makebox[0pt][l]{(C, D, C)} & \\
\midrule
POMCPOW & & \result{29.3}{0.8}{95}{0.95} & & \result{30.9}{0.9}{70}{0.70} \\
PFT-DPW & & \result{27.2}{0.8}{88}{0.88} & & \result{21.4}{0.9}{38}{0.38} \\
QMDP & & \noresult{} & & \noresult{} \\
POMCP-DPW & & \result{16.4}{1.0}{53}{0.53} & & \result{29.6}{0.9}{65}{0.65} \\
DESPOT & & \noresult{} & & \result{36.0}{0.8}{87}{0.87} \\
POMCP\textsuperscript{D} & & \result{14.7}{0.9}{47}{0.47} & & \noresult{} \\
DESPOT\textsuperscript{D} & & \result{14.3}{1.0}{46}{0.46} & & \noresult{} \\

\bottomrule
\end{tabularx}

    }
    \vspace{1mm}
    \footnotesize{The three C or D characters after the solver indicate whether the state, action, and observation spaces are continuous or discrete, respectively. For continuous problems, solvers with a superscript D were run on a version of the problem with discretized action and observation spaces, but they interacted with continuous simulations of the problem.}
\end{table*}

\subsection{Laser Tag}

The Laser Tag benchmark is taken directly from the work of \citet{somani2013despot} and included for the sake of calibration. DESPOT outperforms the other methods. The score for DESPOT differs slightly from that reported by \citet{somani2013despot} likely because of bounds implementation differences.
POMCP performs much better than reported by \citet{somani2013despot} because this implementation uses a state-based rollout policy.

\subsection{Light Dark}

In the Light Dark domain, the state is an integer, and the agent can choose how to move deterministically ($s' = s+a$) from the action space $\mathcal{A}=\{-10, -1, 0, 1, -10\}$. 
The goal is to reach the origin.
If action $0$ is taken at the origin, a reward of $100$ is given and the problem terminates; If action $0$ is taken at another location, a penalty of $-100$ is given.
There is a cost of $-1$ at each step before termination.
The agent receives a more accurate observation in the ``light'' region around $s=10$.
Specifically, observations are continuous ($\obsspace=\reals$) and normally distributed with standard deviation $\sigma=|s-10|$.

\Cref{tab:experiments} shows the mean reward from $1000$ simulations for each solver, and \cref{fig:ld} shows an example experiment.
The optimal strategy involves moving toward the light region and localizing before proceeding to the origin. 
QMDP and solvers predicted to behave like QMDP attempt to move directly to the origin, while POMCPOW and PFT-DPW perform better.
In this one-dimensional case, discretization allows POMCP to outperform all other methods and DESPOT to perform well, but in subsequent problems where the observation space has more dimensions, discretization does not provide the same performance improvement (see \cref{sec:discretization}).

\begin{figure}[htb]
    \centering
    \includegraphics[width=\columnwidth]{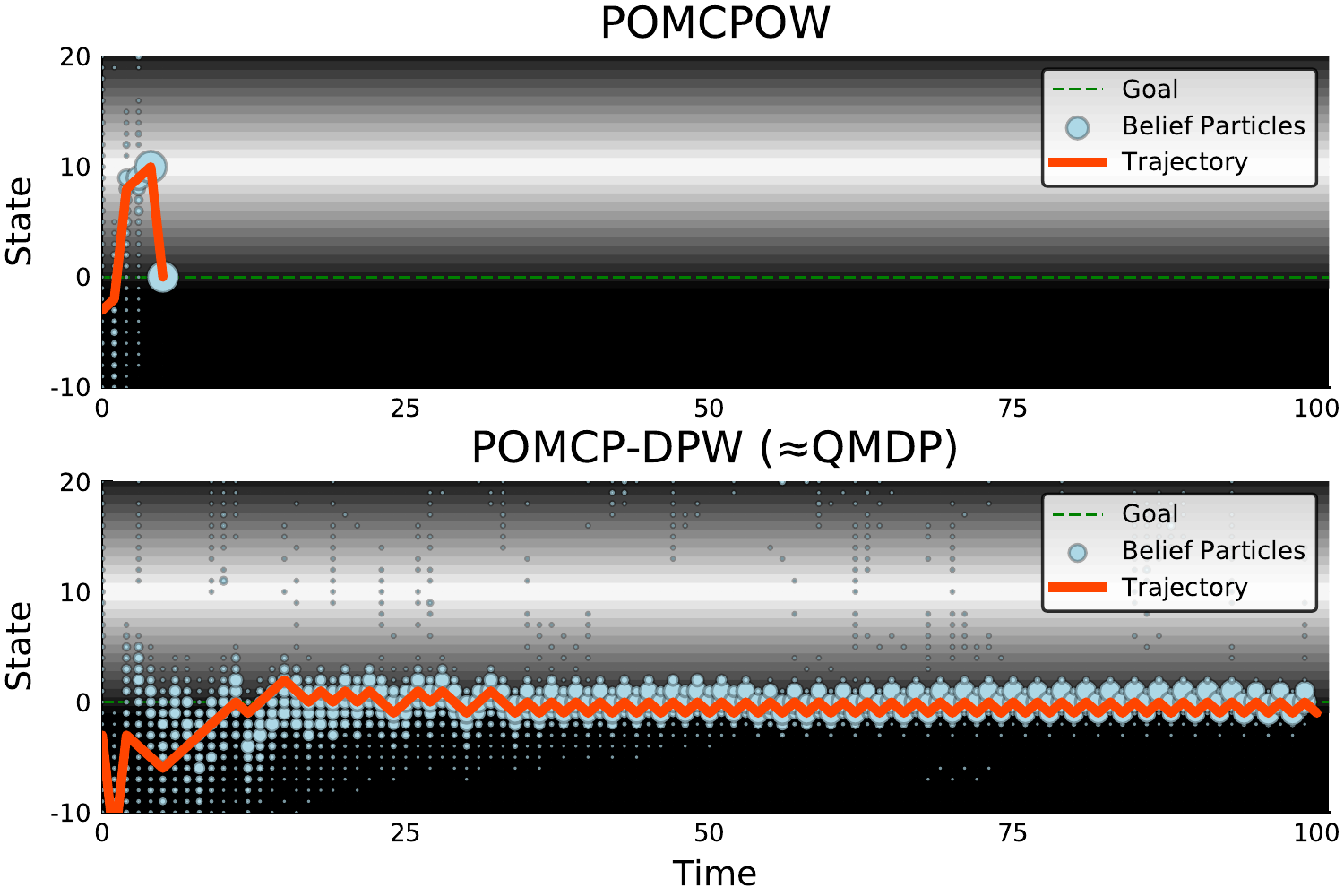}
    \caption{Example trajectories in the Light Dark domain. POMCPOW travels to the light region and accurately localizes before moving to the goal. POMCP-DPW displays QMDP-like behavior: it is unable to localize well enough to take action \num{0} with confidence. The belief particles far away from \num{0} in the POMCP-DPW plot are due to particle reinvigoration that makes the filter more robust.}
    \label{fig:ld}
\end{figure}

\subsection{Sub Hunt}

In the Sub Hunt domain, the agent is a submarine attempting to track and destroy an enemy sub.
The state and action spaces are discrete so that QMDP can be used to solve the problem for comparison.
The agent and the target each occupy a cell of a 20 by 20 grid. The target is either aware or unaware of the agent and seeks to reach a particular edge of the grid unknown to the agent ($\mathcal{S} = \{1,..,20\}^4 \times \{\text{aware}, \text{unaware}\} \times \{N,S,E,W\}$).
The target stochastically moves either two steps towards the goal or one step forward and one to the side.
The agent has six actions, move three steps north, south, east, or west, engage the other submarine, or ping with active sonar.
If the agent chooses to engage and the target is unaware and within a range of 2, a hit with reward 100 is scored; The problem ends when a hit is scored or the target reaches its goal edge.

An observation consists of 8 sonar returns ($\obsspace = \reals^8$) at equally-spaced angles that give a normally distributed estimate ($\sigma=0.5$) of the range to the target if the target is within that beam and a measurement with higher variance if it is not.
The range of the sensors depends on whether the agent decides to use active sonar.
If the agent does not use active sonar it can only detect the other submarine within a radius of 3, but pinging with active sonar will detect at any range.
However, active sonar alerts the target to the presence of the agent, and when the target is aware, the hit probability when engaging drops to $60\%$.

\Cref{tab:experiments} shows the mean reward for $1000$ simulations for each solver.
The optimal strategy includes using the active sonar, but previous approaches have difficulty determining this because of the reduced engagement success rate.
The PFT-DPW approach has the best score, followed closely by POMCPOW.
All other solvers have similar performance to QMDP.

\subsection{Van Der Pol Tag} \label{sec:vdptag}

\begin{figure}[bth]
    \centering
    \begin{minipage}{0.5\linewidth}
        \centering
        \includegraphics[width=0.8\linewidth]{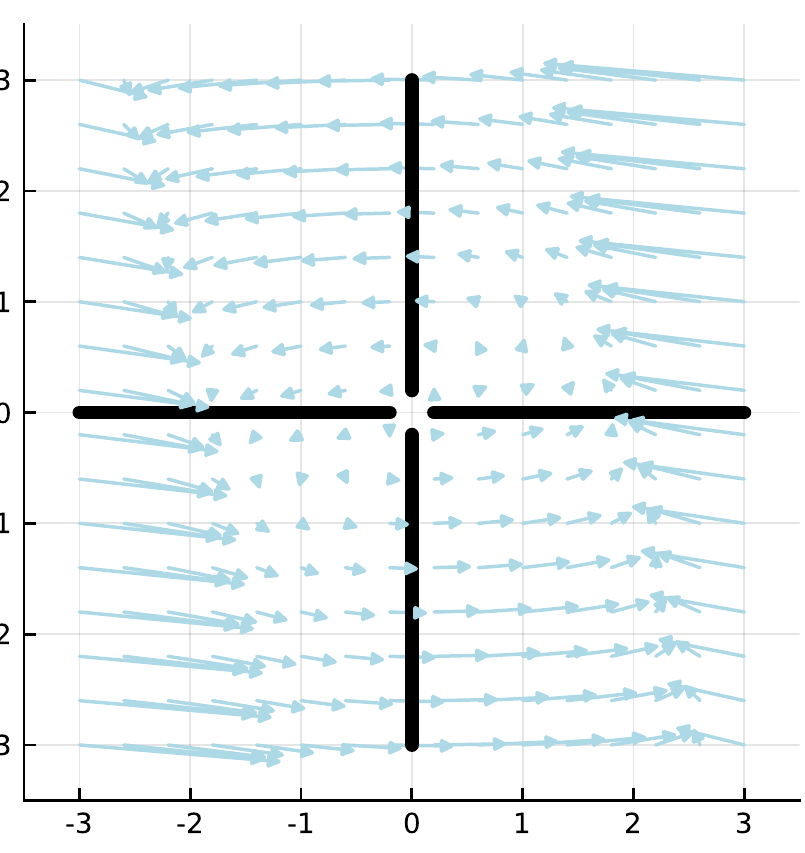}
    \end{minipage}
    \hfill
    \begin{minipage}[c]{0.45\linewidth}
        \caption{Van Der Pol tag problem. The arrows show the target differential equation, and the thick black lines represent the barriers.}
        \label{fig:vdp}
    \end{minipage}
\end{figure}

The only experimental problem with continuous state, action, and observation spaces is called Van Der Pol tag.
In this problem an agent moves through 2D space to try to tag a target ($\mathcal{S}=\reals^4$) that has a random unknown initial position in $[-4, 4]\times[-4,4]$.
The agent always travels at the same speed, but chooses a direction of travel and whether to take an accurate observation ($\mathcal{A} = [0, 2\pi)\times\{0,1\}$).
The observation again consists of 8 beams ($\obsspace=\reals^8$) that give measurements to the target.
Normally, these measurements are too noisy to be useful ($\sigma=5$), but, if the agent chooses an accurate measurement with a cost of \num{5}, the observation has low noise ($\sigma=0.1$).
The agent is blocked if it comes into contact with one of the barriers that stretch from \num{0.2} to \num{3.0} in each of the cardinal directions (see \cref{fig:vdp}), while the target can move freely through.
There is a cost of \num{1} for each step, and a reward of \num{100} for tagging the target (being within a distance of \num{0.1}).

The target moves following a two dimensional form of the Van Der Pol oscillation defined by the differential equations%
\begin{equation}
    \dot{x} = \mu \left( x - \frac{x^3}{3} -y \right) \quad \text{ and }\quad \dot{y} = \frac{1}{\mu}x\text{,} \nonumber
\end{equation}
where $\mu=2$.
Gaussian noise ($\sigma=0.05$) is added to the position at the end of each step.
Runge-Kutta fourth order integration is used to propagate the state.

This problem has several challenging features that might be faced in real-world applications.
First, the state transitions are more computationally expensive because of the numerical integration.
Second, the continuous state space and obstacles make it difficult to construct a good heuristic rollout policy, so random rollouts are used.
\Cref{tab:experiments} shows the mean reward for $1000$ simulations of this problem for each solver.
Since a POMCPOW iteration requires less computation than a PFT-DPW iteration, POMCPOW simulates more random rollouts and thus performs slightly better.

\subsection{Multilane} \label{sec:multilane}

\begin{figure*}[htbp]
    \centering
    \includegraphics[width=\linewidth]{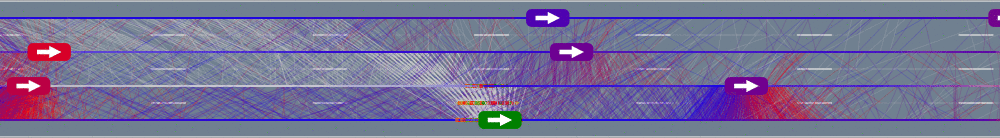}
    \caption{Visualization of POMCPOW planning lane changes in the ``multilane'' example. The green vehicle is controlled by POMCPOW, and the white lines represent its potential trajectories in the POMCPOW tree. The color of the other vehicles represents the aggressiveness of their human driver models, and the red and blue lines proceeding out from them indicate their predicted trajectories.}
    \label{fig:multilane}
\end{figure*}

In the ``multilane'' environment, an autonomous vehicle must make a series of lane changes without causing any unsafe situations.
The other vehicles are governed by a noisy version of the intelligent driver model (IDM)~\cite{treiber2000idm} for longitudinal motion and the MOBIL model~\cite{kesting2007mobil} for lane changing.
The physical states of all vehicles are fully observable, and the latent states of the POMDP are the parameters for the IDM and MOBIL models.
The autonomous vehicle must make three lane changes within \SI{1000}{m}, and avoid any dangerous situations, defined as speed less than \SI{15}{\meter/\second} or braking harder than \SI{4}{\meter/\second\squared}.
This is similar to the problem presented by~\citet{sunberg2017value}, and more details about these specific results can be found in the thesis by \citet{sunberg2018thesis}.
A visualization of POMCPOW planning in this domain is shown in \cref{fig:multilane}.

In this environment, POMCPOW significantly outperforms PFT-DPW.
Since there are no costly information-gathering actions, the QMDP solution is adequate.
Thus, the depth and quality of the search tree is more important than maintaining rich belief representations, and DESPOT with its bound-based exploration and limited number of scenarios is able to outperform all other approaches.

\section{Conclusion}

In this paper, we have proposed a new general-purpose online POMDP algorithm that is able to solve problems with continuous state, action, and observation spaces.
This is a qualitative advance in capability over previous solution techniques, with the only major new requirement being explicit knowledge of the observation distribution.

This study has yielded several insights into the behavior of tree search algorithms for POMDPs.
We explained why POMCP-DPW and other solvers are unable to choose costly information-gathering actions in continuous spaces, and showed that POMCPOW and PFT-DPW are both able to overcome this challenge.
In the most realistic multilane driving test (\cref{sec:multilane}), POMCPOW significantly outperformed PFT-DPW but was itself outperformed by DESPOT because sufficient information could be gathered passively.

The theoretical properties of the algorithms remain to be proven.
In addition, better ways for choosing continuous actions would provide an improvement.
The techniques that others have studied for handling continuous actions such as generalized pattern search \cite{seiler2015online} and hierarchical optimistic optimization \cite{mansley2011sample} are complimentary to this work, and the combination of these approaches will likely yield powerful tools for solving real problems.

\subsubsection*{Acknowledgements}

Toyota Research Institute (``TRI'')  provided funds to assist the authors with their research, but this article solely reflects the opinions and conclusions of its authors and not TRI or any other Toyota entity.

The authors would also like to thank Zongzhang Zhang for his especially helpful comments and Auke Wiggers for catching several pseudocode mistakes.



\bibliographystyle{aaai}
\bibliography{ref}


\vfill

\begin{appendices}

\crefalias{section}{appsec}

\section{Discretization} \label{sec:discretization}

Discretization is perhaps the most straightforward way to deal with continuous observation spaces. The results in \cref{tab:experiments} show that this approach is only sometimes effective.
\Cref{fig:disc} shows the performance at different discretization granularities for the Light Dark and Sub Hunt problems.

Since the Light Dark domain has only a single observation dimension, it is easy to discretize.
In fact, POMCP with fine discretization outperforms POMCPOW.
However, discretization is only effective at certain granularities, and this is highly dependent on the solver and possibly hyperparameters.
In the Sub Hunt problem, with its high-dimensional observation, discretization is not effective at any granularity.
In Van Der Pol tag, both the action and observation spaces must be discretized.
Due to the high dimensionality of the observation space, similar to Sub Hunt, no discretization that resulted in good performance was found.

\begin{figure}[htb]
    \centering
    \begin{subfigure}{\linewidth}
    \centering
    \includegraphics[width=0.8\linewidth]{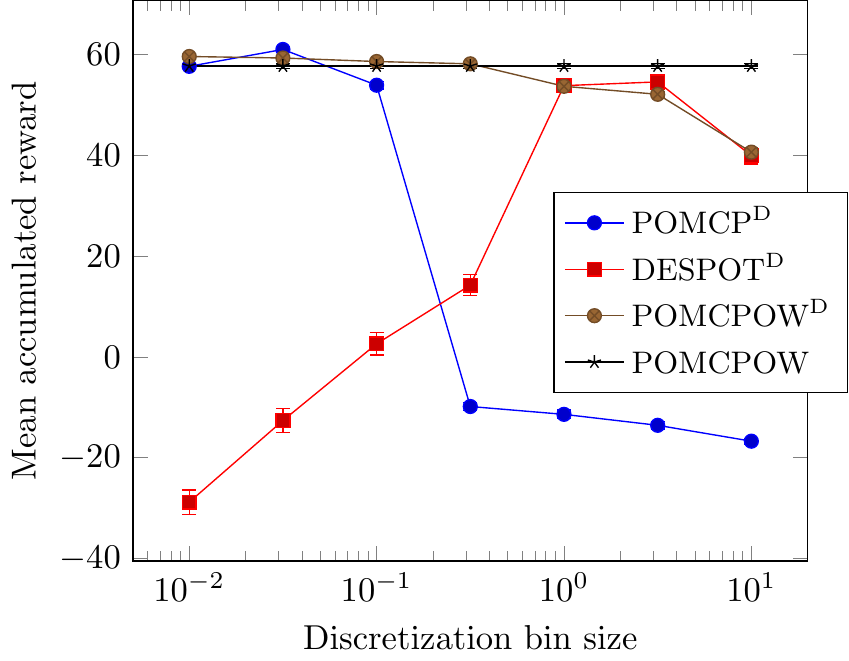}
    \caption{Light Dark} \label{fig:lddisc}

    \end{subfigure}

    \begin{subfigure}{\linewidth}
    \centering
    \includegraphics[width=0.72\linewidth]{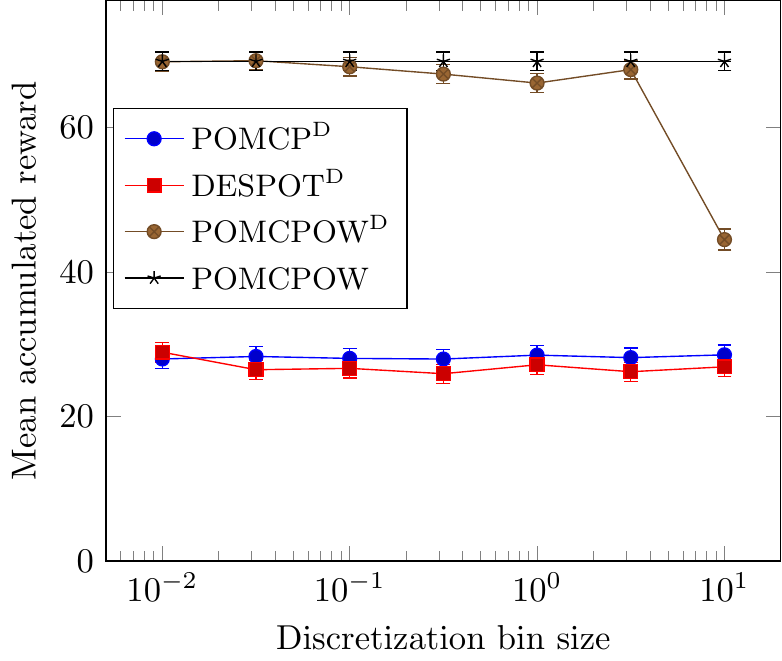}
    \caption{Sub Hunt}
    \label{fig:shdisc}
    \end{subfigure}

    \caption{Discretization granularity studies} \label{fig:disc}
\end{figure}

\section{Hyperparameters} \label{sec:hyper}

\begin{table}[t]
    {\centering
\caption{Hyperparameters used in experiments} \label{tab:hyper}

\begin{tabular}{lrrrr}
    \toprule
                & Laser Tag     & Light Dark    & Sub Hunt      & VDP Tag \\
    \midrule
    \multicolumn{3}{l}{POMCPOW} \\
    \midrule
    $c$         & \num{26.0}    & \num{90.0}    & \num{17.0}    & \num{110.0} \\
    $k_a$       & --            & --            & --            & \num{30.0}  \\
    $\alpha_a$  & --            & --            & --            & \num{1/30}  \\
    $k_o$       & \num{4.0}     & \num{5.0}     & \num{6.0}     & \num{5.0}   \\
    $\alpha_o$  & \num{1/35}    & \num{1/15}    & \num{1/100}   & \num{1/100} \\
    \midrule
    \multicolumn{3}{l}{PFT-DPW} \\
    \midrule
    $m$         & \num{20}      & \num{20}      & \num{20}      & \num{20}    \\
    $c$         & \num{26.0}    & \num{100.0}   & \num{100.0}   & \num{70.0}  \\
    $k_a$       & --            & --            & --            & \num{20.0}  \\
    $\alpha_a$  & --            & --            & --            & \num{1/25}  \\
    $k_o$       & \num{4.0}     & \num{4.0}     & \num{2.0}     & \num{8.0}   \\
    $\alpha_o$  & \num{1/35}    & \num{1/10}    & \num{1/10}    & \num{1/85}  \\
    \midrule

\end{tabular}
    }
    \vspace{1mm}

    \footnotesize{For problems with discrete actions, all actions are considered and $k_a$ and $\alpha_a$ are not needed.}
\end{table}

Hyperparameters for POMCPOW and PFT-DPW were chosen using the cross entropy method \cite{mannor2003cross}, but exact tuning was not a high priority and some parameters were re-used across solvers so the parameters may not be perfectly optimized.
The values used in the experiments are shown in \cref{tab:hyper}. 
There are not enough experiments to draw broad conclusions about the hyperparameters, but it appears that performance is most sensitive to the exploration constant, $c$.

The values for the observation widening parameters, $k_o$ and $\alpha_o$, were similar for all the problems in this work.
A small $\alpha_o$ essentially limits the number of observations to a static number $k_o$, resulting in behavior reminiscent of sparse UCT \cite{browne2012survey}, preventing unnecessary widening and allowing the tree to grow deep.
This seems to work well in practice with the branching factor ($k_o$) set to values between \num{2} and \num{8}, and suggests that it may be sufficient to limit the number of children to a fixed number rather than do progressive widening in a real implementation.

\section{Proof of Theorem 1} \label{sec:proof}

A version of Monte Carlo tree search with double progressive widening has been proven to converge to the optimal value function on fully observable MDPs by \citet{auger2013continuous}.
We utilize this proof to show that POMCP-DPW converges to a solution that is sometimes suboptimal.

First we establish some preliminary definitions taken directly from \citet{auger2013continuous}.

\begin{definition}[Regularity Hypothesis]
    The \emph{Regularity hypothesis} is the assumption that for any $\Delta > 0$, there is a non zero probability to sample an action that is optimal with precision $\Delta$. More precisely, there is a $\theta > 0$ and a $p > 1$ (which remain the same during the whole simulation) such that for all $\Delta > 0$, 
\begin{align}
    Q(ha) & \geq Q^*(h)-\Delta \nonumber{}\\ & \text{ with probability at least } \min(1, \theta \Delta^p)\text{.}
\end{align}
\end{definition}

\begin{definition}[Exponentially sure in $n$]
    We say that some property depending on an integer $n$ is exponentially sure in $n$ if there exists positive constants $C$, $h$, and $\eta$ such that the probability that the property holds is at least $$1-C \exp(-hn^\eta)\text{.}$$
\end{definition}

In order for the proof from \citet{auger2013continuous} to apply, the following four minor modifications to the POMCP-DPW algorithm must be made: 

\begin{enumerate}
    \item Instead of the usual logarithmic exploration, use \emph{polynomial exploration}, that is, select actions based on the criterion
    \begin{equation}
        Q(ha) + \sqrt{\frac{N(h)^{e_d}}{N(ha)}}
    \end{equation}
    as opposed to the traditional criterion
    \begin{equation}
        Q(ha) + c \sqrt{\frac{\log N(h)}{N(ha)}}\text{,}
    \end{equation}
    and create a new node for progressive widening when $\lfloor N^\alpha \rfloor > \lfloor (N-1)^\alpha \rfloor$ rather than when the number of children exceeds $k N^\alpha$.

    \item Instead of performing rollout simulations, keep creating new single-child nodes until the maximum depth is reached.

    \item In line~\ref{lin:selecto}, instead of selecting an observation randomly, select the observation that has been visited least proportionally to how many times it's parent has been visited.

    \item Use the depth-dependent coefficient values in Table 1 from \citet{auger2013continuous} instead of choosing static values.
\end{enumerate}

This version of the algorithm will be referred to as ``modified POMCP-DPW''. The algorithm with these changes is listed in \Cref{alg:mpomcpdpw}.

\begin{algorithm}[htb]
    \caption{Modified POMCP-DPW} \label{alg:mpomcpdpw}
    \begin{algorithmic}[1]
        \Procedure{Plan}{$b$}
            \For{$i \in 1:n$}
                \State $s \gets \text{sample from }b$ \label{lin:msample}
                \State $\Call{Simulate}{s, b, d_\text{max}}$
            \EndFor
            \State $\textbf{return } \underset{a}{\argmax}\, Q(ba)$
        \EndProcedure

        \Procedure {ActionProgWiden}{$h$}
            \If{$\lfloor N(h)^{\alpha_{a,d}} \rfloor > \lfloor (N(h)-1)^{\alpha_{a,d}} \rfloor$}
                \State $a \gets \Call{NextAction}{h}$
                \State $C(h) \gets C(h) \cup \{a\}$
            \EndIf
            \State $\textbf{return } \underset{a \in C(h)}{\argmax}\, Q(ha) + \sqrt{\frac{N(h)^{e_d}}{N(ha)}}$
        \EndProcedure

        \Procedure {Simulate}{$s$, $h$, $d$}        
            \If{$d = 0$}
                \State \textbf{return} $0$
            \EndIf
            \State $a \gets \Call{ActionProgWiden}{h}$
            \If{$\lfloor N(ha)^{\alpha_{o,d}} \rfloor > \lfloor (N(ha)-1)^{\alpha_{o,d}} \rfloor$}
                \State $s',o,r \gets G(s,a)$
                \State $C(ha) \gets C(ha) \cup \{o\}$
                \State $M(hao) \gets M(hao) + 1$ \label{lin:gencount}
                \State $\text{append } s' \text{ to } B(hao)$ \label{lin:minsertion}
            \Else
                \State $o \gets \underset{o \in C(ha)}{\argmin}\, N(hao)/M(hao)$
                \State $s' \gets \text{select } s' \in B(hao) \text{ w.p. } \frac{1}{|B(hao)|}$
                \State $r \gets R(s,a,s')$
            \EndIf
            \State $total \gets r + \gamma \Call{Simulate}{s', hao, d-1}$
            \State $N(h) \gets N(h)+1$
            \State $N(ha) \gets N(ha)+1$
            \State $Q(ha) \gets Q(ha) + \frac{total - Q(ha)}{N(ha)}$
            \State \textbf{return} $total$
        \EndProcedure
    \end{algorithmic}        
\end{algorithm}

We now define the ``QMDP value'' that POMCP-DPW converges to (this is repeated from the main text of the paper) and prove a preliminary lemma.

\begin{definition}[QMDP value]
     Let $Q_\text{MDP}(s,a)$ be the optimal state-action value function assuming full observability starting by taking action $a$ in state $s$.
     The \emph{QMDP value} at belief $b$, $Q_\text{MDP}(b,a)$, is the expected value of $Q_\text{MDP}(s,a)$ when $s$ is distributed according to $b$.   
\end{definition}

\begin{lemma} \label{lem:onestate}
    If POMCP-DPW or modified POMCP-DPW is applied to a POMDP with a continuous observation space and observation probability density functions that are finite everywhere, then each history node in the tree will have only one corresponding state, that is $|B(h)| = 1, M(h)=1\, \forall h$.
\end{lemma}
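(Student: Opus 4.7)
The plan is to show, for any observation-node history $h = h' a o$ appearing in the search tree, that $B(h)$ receives exactly one append and $M(h)$ is incremented exactly once during the entire run. This will follow by combining a syntactic inspection of the algorithm with a measure-theoretic fact about the observation distribution.

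First I would pinpoint every update to $B$ and $M$ in \cref{alg:pomcpdpw} and \cref{alg:mpomcpdpw}. In both listings, the append to $B(hao)$ (line~\ref{lin:insertion} of \cref{alg:pomcpdpw}, line~\ref{lin:minsertion} of \cref{alg:mpomcpdpw}) and the increment $M(hao) \gets M(hao) + 1$ occur only inside the ``generate new observation'' branch, immediately after a fresh draw $(s',o,r) \sim G(s,a)$; the alternative branch only reads these data structures. Consequently $|B(h)|$ and $M(h)$ each equal the number of iterations at which that branch was entered at the parent action node $h'a$ and the freshly drawn observation coincided with the particular $o$ appearing in $h$.

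Second I would argue that, at any fixed parent action node $h'a$, the finite sequence of observations $o_1, o_2, \dots$ produced by successive calls to $G(\cdot, a)$ during the run is almost surely pairwise distinct. Conditional on all randomness preceding the $i$-th call, $o_i$ has a conditional density inherited from the finite density of $\obsdist(\cdot \mid s,a,s')$, so its conditional distribution is absolutely continuous with respect to Lebesgue measure and the probability that $o_i$ lands in the previously generated finite set $\{o_1, \dots, o_{i-1}\}$ is zero. A union bound over the at most $n$ calls of $G$ made at $h'a$, and another union bound over the (at most $n \cdot d_{\max}$) action-history nodes visited in the tree, yields pairwise distinctness of all generated observations with probability one.

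Combining the two steps, for any observation-node history $h = h'ao$ in the tree the observation $o$ was generated at $h'a$ exactly once, so $B(h)$ was appended to exactly once and $M(h)$ was incremented exactly once, giving $|B(h)| = M(h) = 1$. The main subtlety is that the successive observations at a given $h'a$ are not i.i.d.---they depend on the state sampled in \textproc{Plan} at that iteration and on all prior UCB, widening, and sampling randomness---but conditioning on the natural filtration up to each call of $G$ reduces the collision probability to an absolute-continuity statement about a single observation, after which the countable union bound is immediate.
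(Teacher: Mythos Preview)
Your proposal is correct and follows essentially the same route as the paper's proof: both argue that a finite observation density makes each freshly drawn observation almost surely distinct from every previously drawn one, so the lines that append to $B(hao)$ and increment $M(hao)$ fire exactly once per history. Your version is considerably more detailed---making the filtration/conditioning and union-bound steps explicit where the paper simply asserts uniqueness---but the underlying idea is identical.
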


\begin{proof}
    Since the observation probability density function is finite, each call to the generative model will produce a unique observation with probability 1.
    Because of this, lines~\ref{lin:gencount}~and~\ref{lin:minsertion} of \cref{alg:mpomcpdpw} will only be executed once for each observation.
\end{proof}

We are now ready to restate and prove the theorem from the text.


\qmdp*

\begin{proof}
    We prove that modified POMCP-DPW functions exactly as the Polynomial UCT (PUCT) algorithm defined by \citet{auger2013continuous} applied to an augmented fully observable MDP, and hence converges to the QMDP value.
    We will show this by proposing incremental changes to \cref{alg:mpomcpdpw} that do not change its function that will result in an algorithm identical to PUCT.

    Before listing the changes, we define the ``augmented fully observable MDP" as follows: For a POMDP $\mathcal{P} = (\mathcal{S}, \mathcal{A}, \mathcal{T}, \mathcal{R}, \mathcal{O}, \mathcal{Z}, \gamma)$, and belief $b$, the \emph{augmented fully observable MDP}, $\mathcal{M}$, is the MDP defined by $(\mathcal{S}_A, \mathcal{A}, \mathcal{T}_A, \mathcal{R}, \gamma)$, where 
    \begin{equation}
        \mathcal{S}_A = \mathcal{S} \cup \{b\}
    \end{equation}
    and, for all $x, x' \in \mathcal{S}_A$,
    \begin{equation}
        \mathcal{T}_A (x'|x, a) = \begin{cases}
                \mathcal{T} (x' | x, a) & \text{if } x \in \mathcal{S} \\
                \int_S b(s) \mathcal{T} (x' | s, a) ds & \text{if } x = b
        \end{cases}
    \end{equation}
    This is simply the fully observable MDP augmented with a special state representing the current belief.
    It is clear that the value function for this problem $Q_\mathcal{M}(b, a)$ is the same as the QMDP value for the POMDP, $Q_\text{MDP}(b,a)$.
    Thus, by showing that modified POMCP-DPW behaves exactly as PUCT applied to $\mathcal{M}$, we show that it estimates the QMDP values.


    Consider the following modifications to \cref{alg:mpomcpdpw} that do not change its behavior when the observation space is continuous:

    \begin{enumerate}
        \item Eliminate the state count $M$. \emph{Justification}: By \cref{lem:onestate}, its value will be 1 for every node.
        \item Remove $B$ and replace with a mapping $H$ from each node to a state of $\mathcal{M}$; define $H(b) = b$. \emph{Justification}: By \cref{lem:onestate} $B$ always contains only a single state, so $H$ contains the same information.
        \item Generate states and rewards with $G_\mathcal{M}$, the generative model of $\mathcal{M}$, instead of $G$. \emph{Justification}: Since the state transition model for the fully observable MDP is the same as the POMDP, these are equivalent for all $s \in \mathcal{S}$.
        \item Remove the $s$ argument of \textproc{Simulate}. \emph{Justification}: The sampling in line~\ref{lin:msample} is done implicitly in $G_\mathcal{M}$ if $h=b$, and $s$ is redundant in other cases because $h$ can be mapped to $s$ through $H$.
    \end{enumerate}

    The result of these changes is shown in \cref{alg:c}. It is straightforward to verify that this algorithm is equivalent to PUCT applied to $\mathcal{M}$.
    Each observation-terminated history, $h$, corresponds to a PUCT ``decision node'', $z$, and each action-terminated history, $ha$, corresponds to a PUCT ``chance node'', $w$.
    In other words, the observations have no meaning in the tree other than making up the histories, which are effectively just keys or aliases for the state nodes.
    
    Since PUCT is guaranteed by Theorem 1 of \citet{auger2013continuous} to converge to the optimal value function of $\mathcal{M}$ exponentially surely, POMCP-DPW is guaranteed to converge to the QMDP value exponentially surely, and the theorem is proven.

\begin{algorithm}[htb]
    \caption{Modified POMCP-DPW on a continuous observation space} \label{alg:c}
    \begin{algorithmic}[1]
        \Procedure{Plan}{$b$}
            \For{$i \in 1:n$}
                \State $\Call{Simulate}{(b), d_\text{max}}$
            \EndFor
            \State $\textbf{return } \underset{a}{\argmax}\, Q(ha)$
        \EndProcedure

        \Procedure {ActionProgWiden}{$h$}
            \If{$\lfloor N(h)^{\alpha_{a,d}} \rfloor > \lfloor (N(h)-1)^{\alpha_{a,d}} \rfloor$}
                \State $a \gets \Call{NextAction}{h}$
                \State $C(h) \gets C(h) \cup \{a\}$
            \EndIf
            \State $\textbf{return } \underset{a \in C(h)}{\argmax}\, Q(ha) + \sqrt{\frac{N(h)^{e_d}}{N(ha)}}$
        \EndProcedure

        \Procedure {Simulate}{$h$, $d$}        
            \If{$d = 0$}
                \State \textbf{return} $0$
            \EndIf
            \State $a \gets \Call{ActionProgWiden}{h, d}$
            \If{$\lfloor N(ha)^{\alpha_{o,d}} \rfloor > \lfloor (N(ha)-1)^{\alpha_{o,d}} \rfloor$}
                \State $\cdot, o, \cdot \gets G(H(h), a)$
                \State $H(hao),r \gets G_\mathcal{M}(H(h),a)$
                \State $C(ha) \gets C(ha) \cup \{o\}$
            \Else
                \State $o \gets \underset{o \in C(ha)}{\argmin}\, N(hao)$
                \State $r \gets R(H(h), a, H(hao))$
            \EndIf
            \State $total \gets r + \gamma \Call{Simulate}{hao, d-1}$
            \State $N(h) \gets N(h)+1$
            \State $N(ha) \gets N(ha)+1$
            \State $Q(ha) \gets Q(ha) + \frac{total - Q(ha)}{N(ha)}$
            \State \textbf{return} $total$
        \EndProcedure
    \end{algorithmic}
\end{algorithm}

\end{proof}

\begin{remark}
    One may object that multiple histories may map to the same state through $H$, and thus the history nodes in a modified POMCP-DPW tree are not equivalent to state nodes in the PUCT tree. In fact, the PUCT algorithm does not check to see if a state has previously been generated by the model, so it may also contain multiple decision nodes $z$ that correspond to the same state. Though this is not explicitly stated by the authors, it is clear from the algorithm description, and the proof still holds.
\end{remark}

\end{appendices}

\end{document}